\newtheorem{theorem}{Theorem}
\newtheorem{Lemma}[theorem]{Lemma}
\DeclareMathOperator{\cross}{cross}
\DeclareMathOperator{\Bin}{Bin}
\newcommand{\om}{\textsc{OneMax}\xspace}
\newcommand{\onemax}{\om}
\newcommand{\leadingones}{\textsc{LeadingOnes}\xspace}
\newcommand{\oea}{\mbox{${(1 + 1)}$~EA}\xspace}
\newcommand{\opllga}{\mbox{${(1+(\lambda,\lambda))}$~GA}\xspace}
\newcommand{\ollga}{\opllga}
\newcommand{\jump}{\textsc{Jump}\xspace}
\newcommand{\R}{\ensuremath{\mathbb{R}}}
\newcommand{\N}{\ensuremath{\mathbb{N}}} 
\newcommand{\Z}{\ensuremath{\mathbb{Z}}}
\newcommand{\gsemo}{GSEMO\xspace}
\newcommand{\opllgsemo}{\mbox{$(1+(\lambda,\lambda))$~GSEMO}\xspace}
\newcommand{\ollgagsemo}{\opllgsemo}
\newcommand{\oneminmax}{\textsc{OneMinMax}\xspace}
\let\originalleft\left
\let\originalright\right
\renewcommand{\left}{\mathopen{}\mathclose\bgroup\originalleft}
\renewcommand{\right}{\aftergroup\egroup\originalright}
\begin{document}

\title{ The $(1+(\lambda,\lambda))$ Global SEMO Algorithm\thanks{Author-generated version of~\cite{DoerrHP22}}} 

\author{Benjamin Doerr\footnote{Laboratoire d'Informatique (LIX), CNRS, \'Ecole Polytechnique, Institut Polytechnique de Paris, Palaiseau, France}
\and
Omar El Hadri\footnote{\'Ecole Polytechnique,Institut Polytechnique de Paris, Palaiseau, France}
\and
Adrien Pinard\footnote{\'Ecole Polytechnique,Institut Polytechnique de Paris, Palaiseau, France}
  }

\maketitle

{\sloppy
\begin{abstract}
 The $(1+(\lambda,\lambda))$ genetic algorithm is a recently proposed single-objective evolutionary algorithm with several interesting properties. We show that its main working principle, mutation with a high rate and crossover as repair mechanism, can be transported also to multi-objective evolutionary computation. We define the $(1+(\lambda,\lambda))$ global SEMO algorithm, a variant of the classic global SEMO algorithm, and prove that it optimizes the \oneminmax benchmark asymptotically faster than the global SEMO. Following the single-objective example, we design a one-fifth rule inspired dynamic parameter setting (to the best of our knowledge for the first time in discrete multi-objective optimization) and prove that it further improves the runtime to $O(n^2)$, whereas the best runtime guarantee for the global SEMO is only $O(n^2 \log n)$. 
\end{abstract}

\section{Introduction}

The theory of evolutionary algorithms (EAs) for a long time has accompanied our attempts to understand the working principles of evolutionary computation~\cite{NeumannW10,AugerD11,Jansen13,DoerrN20}. In the recent years, this field has not only explained existing approaches, but also proposed new operators and algorithms. 

The theory of \emph{multi-objective} EAs, due to the higher complexity of these algorithms, is still lagging behind its single-objective counterpart. There are several runtime analyses for various multi-objective EAs which explain their working principles. Also, some new ideas specific to multi-objective evolutionary algorithms (MOEAs) have been developed recently. However, many recent developments in single-objective EA theory have not been exploited in multi-objective evolutionary computation (see Section~\ref{sec:previous} for more details). 

In this work, we try to profit in multi-objective evolutionary computation from the ideas underlying the \ollga. The \ollga, proposed first in~\cite{DoerrDE15}, tries to combine a larger radius of exploration with traditional greedy-style exploitation of already detected profitable solutions. To this end, the \ollga uses mutation with a higher-than-usual mutation rate together with a biased crossover with the parent, which can repair the unwanted effects of the aggressive mutation operations.

We defer the detailed discussion of this algorithm to Section~\ref{sec:algo} and note here only that several results indicate that this basic idea was successful. In the early works on the \onemax benchmark, the \ollga was shown to have a better runtime than the \oea for a decent range of parameters. With the optimal parameter setting, this runtime becomes $\Theta\big(n \big(\frac{\log(n) \log\log\log(n)}{\log\log(n)}\big)^{1/2}\big)$. While this is not a radical improvement over the $\Theta(n \log n)$ runtime of many classic EAs, it is still noteworthy in particular when recalling that no black-box algorithm can optimize this benchmark faster than in time $\Theta(n / \log n)$~\cite{DrosteJW06}. With a suitable fitness-dependent parameter setting or a self-adjusting parameter choice following a 1/5-rule, the runtime of the \ollga can further be lowered to $O(n)$~\cite{DoerrD18}. Similar results have been obtained for random satisfiability instances in a planted solution model~\cite{BuzdalovD17}. Together with a heavy-tailed choice of the parameters, the \ollga can also obtain a linear runtime~\cite{AntipovBD20gecco}. Stronger runtime improvements over many classic algorithms have been obtained on the \jump benchmark, see~\cite{AntipovBD21gecco} and the references therein. Since here the right choice of the parameters is less understood and since a multi-objective analogue of the \jump-benchmark has been proposed~\cite{DoerrZ21aaai} only very recently, in this first work on multi-objective versions of the \ollga we shall concentrated on the more established \oneminmax problem, which is a multi-objective analogue of the classic \onemax benchmark.

\textbf{Our results:} We develop a multi-objective version of the \ollga by interpreting the main loop of the \ollga as a complex mutation operator and then equipping the classic \emph{global simple evolutionary multi-objective optimizer (GSEMO)} with this mutation operator. For this algorithm, which we call $\opllgsemo$, we conduct a mathematical runtime analysis on the \oneminmax benchmark. We show that a reasonable range of static parameter settings give a better runtime than the $O(n^2 \log n)$ guarantee known for the classic GSEMO. With an optimal parameter choice we obtain a runtime of $O(n^2 \sqrt{\log n})$. With a suitable state-dependent parameter choice comparable to the one of~\cite{DoerrDE15}, the runtime drops further to $O(n^2)$. Since such a state-dependent parameter choice requires a deep understanding of the algorithm and the problem, we then design a self-adjusting parameter setting inspired by the classic one-fifth success rule. Some adjustments are necessary to work in this multi-objective setting, but then we obtain a simple dynamic parameter setting that gives  the same $O(n^2)$ runtime as with the complicated state-dependent parameter choice. To the best of our knowledge, this is the first time that such a dynamic parameter choice is developed for a MOEA for a discrete search space. From a broader perspective, this work shows that the main idea of the \ollga, so far only used with $(1+1)$-type algorithms, can also be used in more complex algorithmic frameworks. 

This work is organized as follows. In the following section, we describe the previous works most relevant to ours. In Section~\ref{sec:prelim}, we brief{}ly recall the basic notations of MOEAs and state the \oneminmax problem. We develop the \opllgsemo in Section~\ref{sec:algo} and conduct our mathematical runtime analysis for static parameters in Section~\ref{sec:runtime1}. In Section~\ref{sec:statedep}, we propose state-dependent parameters and prove the $O(n^2)$ runtime guarantee for these. We develop and analyze the self-adjusting parameter choice inspired by the one-fifth rule in Section~\ref{sec:runtime2}. A short experimental evaluation in Section~\ref{sec:experiments} shows that already the \ollgagsemo with static parameters easily outperforms the classic \gsemo and this already for small problem sizes. We summarize our findings and discuss future research ideas in Section~\ref{sec:conclusion}.

\section{Previous Work}\label{sec:previous}

Soon after the first runtime analyses of single-objective EAs have appeared, see, e.g.,~\cite{Rudolph97,GarnierKS99,DrosteJW02} for three early and influential works, also multi-objective EAs were studied under this theory perspective. These works, e.g.,~\cite{LaumannsTZWD02,Giel03,LaumannsTZ04}, followed the example of the theoretical works on single-objective EAs and proved estimates on the runtime of simple MOEAs such as the SEMO and GSEMO on benchmark problems that were defined as multi-objective analogues of classic benchmarks like \onemax or \leadingones.

In the recent past, the theory of MOEA has more concentrated on research topics which are specific to multi-objective optimization, e.g., parent selection schemes that speed up the exploration of the Pareto front~\cite{OsunaGNS20}, approximations of the Pareto front~\cite{BrockhoffFN08,DoerrGN16}, or specific MOEAs such as the MOEA/D or the NSGA-II~\cite{LiZZZ16,HuangZCH19,HuangZ20,ZhengLD22,ZhengD22gecco,BianQ22arxiv,DoerrQ22arxiv}. While it is natural that the theory of MOEAs has regarded these questions, at the same time this carries the risk that trends and insights from the general EA theory are not exploited in multi-objective evolutionary computation. In fact, this effect is already very visible. Topics such as precise runtime analyses (see, e.g.,~\cite{AntipovD21telo} and the references therein), fixed-budget analysis~\cite{JansenZ14}, and optimal parameter settings (see, e.g.,~\cite{Witt13} for an early such result in single-objective EA theory) have not been considered yet in multi-objective EA theory. More critically, also the recently developed new algorithmic building blocks, for example, a large number of successful ways to dynamically set parameters~\cite{DoerrD20bookchapter}, memetic algorithms with proven performance guarantees (see~\cite{NguyenS20} and the references therein), and hyperheuristic approaches  with proven guarantees (see~\cite{LissovoiOW19} and the references therein) have all not been considered for MOEAs. In fact, to the best our our knowledge, the only example of a work transporting recent algorithmic ideas developed in single-objective EA theory to the multi-objective world is~\cite{DoerrZ21aaai}, where the fast mutation of~\cite{DoerrLMN17} and the stagnation detection of~\cite{RajabiW20} are used in a MOEA.

For this reason, we shall study in this work how another algorithmic idea recently proposed in EA theory can be used in multi-objective evolutionary computation, namely the \ollga. We defer an account of the previous work on this algorithm to Section~\ref{sec:algo}, where this algorithm will be detailed. 

\section{Preliminaries}\label{sec:prelim}

In this section, we give a brief introduction to multi-objective optimization and to the notation we use. 

For $a,b \in \R$, we write $[a..b] = \{z \in \Z \mid a \le z \le b\}$ for the integers in the interval $[a,b]$. We denote the binomial distribution with parameters $n$ and $p$ by $\Bin(n,p)$ and we write $X \sim \Bin(n,p)$ to denote that $X$ is a sample from this distribution, that is, that $\Pr[X = i] = \binom{n}{i} p^i (1-p)^{n-i}$ for all $i \in [0..n]$.

For the ease of presentation, in the remainder we shall concentrate ourselves on two objectives which have to be maximized. 
A bi-objective function on the search space $\Omega$ is a pair $f = (f_1, f_2)$, where each $f_i : \Omega \to \R$. We write $f(x) = (f_1(x),f_2(x))$ for all $x \in \Omega$. We shall always assume that we have a bit-string representation, that is, $\Omega = \{0,1\}^n$ for some $n \in \N$. The challenge in multi-objective optimization is that often there is no solution $x$ that maximizes both $f_1$ and $f_2$. 

We say \emph{$x$ weakly dominates~$y$}, denoted by $x \succeq y$, if and only if $f_1(x) \ge f_1(y)$ and $f_2(x) \ge f_2(y)$. We say \emph{$x$ strictly dominates $y$}, denoted by $x \succ y$, if and only if $f_1(x) \ge f_1(y)$ and $f_2(x) \ge f_2(y)$ and at least one of the inequalities is strict. We say that a solution is Pareto-optimal if it is not strictly dominated by any other solution. The set of objective values of all Pareto optima is called the Pareto front of~$f$.
In this work, as in many previous works on the (G)SEMO family of algorithms, our aim is to compute the full Pareto front, that is, to compute a set $P$ of Pareto optima such that $f(P) = \{f(x) \mid x \in P\}$ is the Pareto front.

In this work, we will mainly be interested in the \oneminmax benchmark function, which is a bi-objective analogue of the classic \onemax benchmark. It is defined by
$$
\oneminmax : \{0,1\}^n \to \R^2;\\
$$$$
x \mapsto \left(\sum_{i=1}^n x_i,\, \sum_{i=1}^n (1-x_i)\right),
$$
that is, the first objective counts the number of ones in $x$ and the second objective counts the number of zeros. We immediately see that any $x \in \{0,1\}^n$ is Pareto optimal. Hence the Pareto front of this problem is $\{(i,n-i) \mid i \in [0..n]\}$.

\section{From the \ollga to the \opllgsemo}\label{sec:algo}

In this section, we design a MOEA building on the main ideas of the \ollga. We start with a brief description of the \ollga and the known results on this algorithm, then move on to the GSEMO, and finally discuss how to merge the two. 

\subsection{The \ollga}\label{sec:ollga}

The \ollga is a still simple genetic algorithm for the maximization of pseudo-Boolean functions, that is, functions $f : \{0,1\}^n \to \R$ defined on the set of bit-strings of length~$n$. 
The \ollga works with a parent population of size one. Its parameters are the offspring population size $\lambda \in \N$, the mutation rate $p \in [0,1]$, usually parameterized as $p = k/n$ for some number $k \in [0,n]$, and the crossover bias $c \in [0,1]$. In a first mutation stage, from the parent individual $\lambda$ offspring are created. Each offspring is distributed as if obtained from bit-wise mutation with mutation rate $p$, however, to remedy the risk that offspring are better or worse just because they have a different distance from the parent, all offspring are created in the same distance. Consequently, first a number $\ell$ is sampled according to a binomial distribution with parameters $n$ and $p$ and then $\lambda$ offspring are generated each by flipping a random set of exactly $\ell$ bits in the parent. In the intermediate selection stage, an offspring with maximal fitness is selected (breaking ties randomly).

Due to the usually high mutation rate used in the \ollga, this mutation winner will typically be much worse than the parent. Being the best among the offspring, we can still hope that besides all destruction through the aggressive mutation, it has also gained some advantages. The crossover stage now aims at preserving these advantages and repairing the unwanted destruction. To this aim, in the crossover phase $\lambda$ offspring are created from a biased crossover between mutation winner and original parent. This biased crossover takes bits from the mutation winner with probability $c$, otherwise from the parent. In the final selection stage, the best crossover offspring is taken as new parent, except if it is worse than the original parent, in which case the original parent is kept. The pseudocode of this algorithm is given as Algorithm~\ref{alg:ollga}. 

\begin{algorithm2e}
    \caption{The \ollga with offspring population size~$\lambda$, mutation rate~$p = k/n$, and crossover bias~$c$ for the maximization of a function $f : \{0,1\}^n \to \R$.}
    \label{alg:ollga}
    Sample $x \in \{0,1\}^n$ uniformly at random\;
    \For{$t = 1,2,3,...$}
    {
        Sample $\ell$ from a binomial distribution $\mathcal{B}(n,\frac{k}{n})$\;
        
        Generate $x_1,x_2,...,x_{\lambda} \in \{0,1\}^n $ each by flipping $\ell$ random bits of $x$\;
        
        Select $x^+ \in \{x_1,x_2,...,x_{\lambda}\}$ such that $x^+$ maximizes $f$\;
        
        Generate $x^+_1,x^+_2,...,x^+_{\lambda} \in \{0,1\}^n $ via $\cross_c(x,x^+)$\;
        
        Select $y \in \{x^+_1,x^+_2,...,x^+_{\lambda}\} $ such that $y$ maximizes $f$\;
        
        \If{$f(y)\geq f(x)$}{$ x \leftarrow y$;}
        
    }
\end{algorithm2e}

It is not completely understood how to optimally set the parameters for this algorithm, but the original work~\cite{DoerrDE15} proposed to take as mutation rate $p = \lambda / n$ and as crossover bias $c = 1 / \lambda$. This setting is natural in the sense that an individual created by mutation and subsequent crossover with the parent has the same distribution as if generate with bit-wise mutation with mutation rate $1/n$, which is the common way of performing mutation. We shall call this the \emph{standard parameter setting} for the \ollga. 

In a first runtime analysis of the \ollga, it was shown that the \ollga with parameters $\lambda \ge 2$ as well as $p = k/n$ and $c = 1/k$ for some $k \ge 2$ optimizes the \onemax benchmark in time $O((\frac 1k + \frac 1\lambda) n \log n + (k+\lambda) n)$. This bound is minimized for $\lambda = k$, that is, the standard parameter setting, and further for $\lambda = k = \Theta(\sqrt{\log n})$. In this case, the runtime guarantee becomes $O(n \sqrt{\log n})$, which is asymptotically faster than the $\Theta(n \log n)$ runtime of many evolutionary algorithms~\cite{Muhlenbein92,DrosteJW02,JansenJW05,Witt06,LehreW12,RoweS14,SudholtW19,AntipovDY19,AntipovD21algo}. The truly optimal parameters, determined in~\cite{DoerrD18}, minimally deviate from these, but they also belong to the standard setting and they only improve the runtime by a $\Theta\big(\big(\frac{\log\log\log n}{\log\log n}\big)^{1/2}\big)$ factor, so we skip the details. 

In~\cite{DoerrDE15}, also a fitness-dependent parameter choice was proposed. If, in the standard setting, $\lambda$ is chosen as $\sqrt{\frac{n}{n-f(x)}}$, where $f(x)$ is the fitness of the current solution, then the runtime reduces to $O(n)$.

Since both the best static and this fitness-dependent parameter setting are non-trivial to find, in~\cite{DoerrD18} a self-adjusting parameter choice was proposed. If, again in the standard setting, $\lambda$ is controlled via a variant of the $1/5$ success rule, then the actual value of $\lambda$ follows closely the fitness-dependent choice from~\cite{DoerrDE15}, resulting also in an $O(n)$ runtime. These first results (apart from the very precise optimal static parameter setting of~\cite{DoerrD18}) are the basis of our work.

For this reason, we now describe in less detail the remaining previous works on the \ollga. Results similar to the ones just described were shown in~\cite{BuzdalovD17} for the optimization of certain random SAT instances. On \onemax again, a heavy-tailed choice of $\lambda$ also gives a linear runtime~\cite{AntipovBD20gecco}. On the \leadingones benchmark, the \ollga does not have a runtime advantage over classic EAs, however, with all parameter choices in the standard setting it also obtains the $O(n^2)$ runtime of these algorithms~\cite{AntipovDK19foga}. On the multimodal jump benchmark, a perfect understanding of the \ollga has not yet been obtained, but the existing results~\cite{AntipovDK20,AntipovBD20ppsn,AntipovBD21gecco,FajardoS20} show that with parameters outside the standard regime or a heavy-tailed parameter choice, runtimes of roughly $n^{m/2}$ can be obtained on jump functions with jump size $m$, which is significantly faster than the $\Theta(n^m)$ time of, e.g., the \oea~\cite{DrosteJW02}.

\subsection{The GSEMO}

The \emph{global simple evolutionary multi-objective optimizer (GSEMO)}~\cite{Giel03}, different from the SEMO algorithm~\cite{LaumannsTZ04} only in that is uses global mutations, is an algorithm that computes the full Pareto front of a multi-objective optimization problem. It uses a population of variable size which always consists of solutions that are incomparable, that is, no solution dominates another one. The algorithm starts with a population consisting of a single random individual. In its main loop, from a parent randomly chosen from the population an offspring is created via bit-wise mutation with mutation rate $1/n$. Any individual dominated by the offspring is removed from the population, then the offspring is added to the population if it is not dominated by a member of the population. The number of iterations this algorithm takes to find the full Pareto front of a multi-objective problem is called the \emph{runtime} of the \gsemo. 

The runtime of the \gsemo on the \oneminmax benchmark, the natural bi-objective version of the \onemax problem, is $O(n^2 \log n)$. This bound was shown for the SEMO only~\cite{GielL10}, but it is easy to see that the proof extends to the GSEMO. A lower bound of $\Omega(n^2 \log n)$ also exists only for the SEMO; this proof, however, does not immediately extend to the GSEMO.

\subsection{Designing the \opllgsemo}

A closer look at the \ollga reveals that we can interpret this algorithm as a variant of the \oea that uses a complex mutation operator. This mutation operator creates $\lambda$ offspring from a given parent, selects a best of these (``mutation winner''), creates $\lambda$ offspring via a biased crossover between parent and mutation winner, and returns a best of these. 

With this view, the natural way to merge the GSEMO and the \ollga is to use the GSEMO with this complex mutation operator instead of bit-wise mutation. Since our complex mutation operator relies on a fitness function, there is one more design choice to take, namely what to use as fitness in a multi-objective setting. Again, there is a natural choice, and this is to use all objectives the multi-objective problem is composed of. Hence in the mutation phase, we once generate $\lambda$ offspring, then for each objective we select a mutation winner, and then, separately for each of them, we conduct a crossover phase. For a $d$-objective problem, this yields $d\lambda$ crossover offspring. For all of these, we check if it is profitable to add them to the population, more precisely, sequentially for each of them we remove the individuals it dominates and add it to the population if it is not dominated by a current member of the population. The pseudocode for this algorithm, which we call \opllgsemo, is given in Algorithm~\ref{alg:opllgsemo}. 

We note that in all algorithms, we did not specify a termination criterion. This is justified by the fact that in this scientific work we are
only interested in the first point in time when we have reached a certain target. In a practical application, of course, one needs to specify a termination criterion.

\begin{algorithm2e}%
    Generate $x \in \{0,1\}^n$ uniformly at random and $P\leftarrow\{x\}$\;
    
    \While{not stop condition}
        {
        Uniformly at random select one individual $x$ from $P$\;
        Sample $\ell$ from a binomial distribution $\mathcal{B}(n,\frac{k}{n})$\label{line:mut1}\;
        Generate $x_1,x_2,...,x_{\lambda} \in \{0,1\}^n $ each by flipping $\ell$ random bits of $x$\label{line:mut2}\;
        Select $x^+, x^- \in \{x_1,x_2,...,x_{\lambda}\}$ such that $x^+$ maximizes $f_1$ and $x^-$ maximizes $f_2$\;
        Generate $x^+_1,x^+_2,...,x^+_{\lambda} \in \{0,1\}^n $ via $\cross_c(x,x^+)$\label{line:cross1}\;
        Generate $x^-_1,x^-_2,...,x^-_{\lambda} \in \{0,1\}^n $ via $\cross_c(x,x^-)$\label{line:cross2}\;
        \For{$y \in \{x^+_1,x^+_2,...,x^+_{\lambda},x^-_1,x^-_2,...,x^-_{\lambda}\}$}
            {\If{there is no $z \in P$ such that $y\preceq z$}{$P = \{z\in P | z \npreceq y \}\cup\{y\} $;}
            }
        \Return{$P$}
        }
\caption{The \opllgsemo.}
\label{alg:opllgsemo}
\end{algorithm2e}

\subsection{Our Results} 

We analyse the \opllgsemo  algorithm by theoretical means and through experiments. Similar to the first works on the single-objective \ollga, which all regarded the \onemax benchmark, we restrict ourselves in this first analysis of the \opllgsemo to the  \oneminmax problem, which is a bi-objective version of the \onemax problem.

We show that the expected runtime (that is, the expected number of fitness evaluations until an optimal solution is
found) of our algorithm is $O\big(\big(\frac{1}{k}+\frac{1}{\lambda}\big) n^2 \log n+(k+\lambda) n^2 \big)$ when the crossover bias is taken as $c = \frac{1}{k}$, which is what the intuition given in Section \ref{sec:ollga} suggest. Consequently, quite a
broad selection of choices of $k$ and $\lambda$ leads to expected optimization times better than the classic $\Theta (n^2\log n)$. This runtime
bound suggests to take $k = \Theta(\sqrt{\log n})$ and $\lambda = \Theta(\sqrt{\log n})$, we obtain an expected optimization time of $O(n^2\sqrt{\log n})$. Note that all other choices of $\lambda \in [\omega (1), o(\log n)]$ give an asymptotically better
runtime as well, so there is some indication that this approach is useful also for problems for which analyzing the optimal
parameter choices is not possible.

The insight into the working principles of the \opllgsemo gained in the theoretical analysis can be used to design a
state-dependent choice of $\lambda$ giving an even better expected runtime. If in each iteration we chose $\lambda$ to be of order
$\sqrt{n/d}$, where $d$ is the minimum of the fitness-distance to the optimum of the two objective functions, the resulting algorithm has a quadratic expected optimization time only.

Since the state-dependent parameter choice was very successful (giving provably a quadratic expected runtime), but possibly hard to find without theoretical analyses, we also investigate a simple
self-adjusting choice of $\lambda$. To this aim, we imitate the one-fifth success rule from evolution strategies, which was independently discovered in~\cite{Rechenberg73,Devroye72,SchumerS68}. For a suitable constant $ F > 1$, we multiply $\lambda$ by $F^{1/(5n-1)}$ after each unsuccessful iteration
and we divide it by F after each iteration that found a superior solution. As we shall show, this also leads to a quadratic runtime.

\section{Runtime Analysis for Static Parameters}\label{sec:runtime1}

In this section, we conduct a rigorous runtime analysis of the \opllgsemo with static parameters on the \oneminmax benchmark.
For all runtime analysis results, we recall that the standard performance measure is the optimization time (also ``runtime'')
defined as follows.

\textbf{Definition:} The \emph{optimization time} of a MOEA $A$ on a function $f$ is the random variable $T = T (A, f )$ 
that denotes the number of fitness evaluations performed until the first time the whole Pareto front $P^\ast$ is  covered by the population of~$A$.

Observe that one iteration of the \opllgsemo requires $3\lambda$ function evaluations. Assume that we are working with a static value for $\lambda$. If $t^{\ast}$ is the first iteration after which the $\opllgsemo$ has the whole Pareto front covered by the population, then the optimization time $T$ of this run is $3t^{\ast} \lambda + 1$; recall that also the initial search point
has to be evaluated. Hence the optimization time and the first iteration $t^{\ast}$ to cover the Pareto front deviate basically by a factor of $3\lambda$. We shall thus argue with either of the two notions, but state the main results in terms of the optimization time. This will be different in Sections~\ref{sec:statedep} and~\ref{sec:runtime2}, where a varying $\lambda$ forbids this simplification.

The main result of this section is the following runtime bound, which in particular shows that our \opllgsemo  for all $k, \lambda \in [\omega(1), o(\log n)]\ $ is faster than the \gsemo on \oneminmax. 

\begin{theorem}\label{thm:main}
Let $k, \lambda \geq 2$, possibly depending on $n$. The expected optimization time of the \opllgsemo with mutation rate~$p= \frac{k}{n}$ and crossover bias $c=\frac{1}{k}$ on the \oneminmax function is
$$
O\left(\left(\frac{1}{k}+\frac{1}{\lambda}\right) n^2 \log n+(k+\lambda) n^2 \right).
$$
In particular, for both $k$ and $\lambda$ in $\Theta(\sqrt{\log n})$, the expected optimization time is of order at most $n^2 \sqrt{\log n}$.
\end{theorem}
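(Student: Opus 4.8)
The plan is to reduce the optimization time, up to the factor $3\lambda$ of fitness evaluations per iteration, to a covering argument on the Pareto front, fed by a single-iteration success probability borrowed from the analysis of the \ollga on \onemax. First I would record two structural facts. On \oneminmax two search points are comparable if and only if they have the same number of one-bits; hence the population of the \opllgsemo contains at most one individual per one-count, so $|P|\le n+1$ throughout, and the set $S_t\subseteq[0..n]$ of one-counts represented in $P$ after iteration $t$ is non-decreasing in $t$. It therefore suffices to bound the expected number of iterations until $S_t=[0..n]$ and then multiply by $3\lambda$ (and add $1$ for the initial evaluation).

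The analytic core I would obtain by adapting the \onemax analysis of the \ollga from~\cite{DoerrDE15,DoerrD18}. Fix a search point $x$ with $d\ge 1$ zero-bits and apply the complex mutation operator (mutation rate $k/n$, $\lambda$ mutants, an $f_1$-maximal mutation winner $x^+$, then $\lambda$ crossover offspring $\cross_c(x,x^+)$ with $c=1/k$). The choice $c=1/k$ is exactly what makes a single such offspring distributed like a standard bit-wise mutation of $x$; the intermediate selection of $x^+$ only biases this further towards improvements. From the \onemax machinery I would extract a lower bound $q(d)$, which may be taken non-decreasing in $d$, on the probability that the operator returns an offspring with \emph{exactly} one more one-bit than $x$ (symmetrically, using the $f_2$-maximal winner $x^-$, exactly one more zero-bit; the two bounds coincide by the $0\leftrightarrow 1$ symmetry of \oneminmax), together with the estimate $\sum_{d=1}^{n}1/q(d)=O\big(\tfrac1\lambda\big[(\tfrac1k+\tfrac1\lambda)n\log n+(k+\lambda)n\big]\big)$, whose right-hand side is, up to the $2\lambda$ evaluations per \ollga generation, the known \onemax runtime of the \ollga. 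This is where essentially all the technical work sits.

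The covering argument then runs as follows. As long as $S_t\ne[0..n]$ there is a \emph{boundary}, i.e., a covered count $c$ with $c+1\notin S_t$ and $c+1\le n$, or with $c-1\notin S_t$ and $c-1\ge 0$. Writing $\Phi_t:=n+1-|S_t|$ for the number of uncovered counts, a short case distinction on the largest and smallest elements of $[0..n]\setminus S_t$ shows that one can always choose such a boundary for which the relevant bit-count $d$ — the number of zero-bits of the count-$c$ individual when we aim at count $c+1$, or its number of one-bits when we aim at count $c-1$ — satisfies $d\ge\Phi_t/2$. In the next iteration, with probability at least $\tfrac{1}{n+1}$ the chosen parent is this boundary individual; then, using the appropriate ($f_1$- or $f_2$-) half of the complex mutation, with probability at least $q(d)\ge q(\lceil\Phi_t/2\rceil)$ it produces an offspring of count $c\pm 1\notin S_t$, which is therefore added to $P$, so $\Phi_t$ strictly decreases. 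Since $\Phi_t$ is non-increasing and starts at $n$, the expected number of iterations is at most $\sum_{j=1}^{n}\tfrac{n+1}{q(\lceil j/2\rceil)}\le 2(n+1)\sum_{d=1}^{n}1/q(d)$; multiplying by $3\lambda$ and plugging in the previous estimate yields $O\big((\tfrac1k+\tfrac1\lambda)n^2\log n+(k+\lambda)n^2\big)$, and $k=\lambda=\Theta(\sqrt{\log n})$ gives the $O(n^2\sqrt{\log n})$ bound.

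The hard part will be the analytic core (the $q(d)$ bound): one must verify that the intermediate selection of $x^+$ genuinely compensates for the high mutation rate $k/n$, that the biased crossover with $c=1/k$ trims a typical mutation winner back to a net gain of one bit often enough, and that the resulting $q(d)$, summed over $d$, reproduces the \onemax guarantee for the \emph{entire} range $k,\lambda\ge 2$ rather than only the standard setting $k=\lambda$. By comparison, the combinatorial claim that some boundary has $d\ge\Phi_t/2$ and the bookkeeping factor $3\lambda$ are routine.
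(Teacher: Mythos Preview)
Your proposal is correct and yields the stated bound. The analytic core you defer to~\cite{DoerrDE15,DoerrD18}---the single-step success probability $q(d)$ and the estimate $\sum_{d=1}^n 1/q(d)=O\big(\tfrac1\lambda[(\tfrac1k+\tfrac1\lambda)n\log n+(k+\lambda)n]\big)$---is precisely what the paper establishes as Lemmas~\ref{lem:mondom}--\ref{lem:witt} together with equations~\eqref{eq:main2}--\eqref{eq:main3}; the paper re-derives this explicitly rather than citing it, but the content matches.

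Where you differ is in the covering argument. The paper does not introduce the potential $\Phi_t$ or the boundary claim $d\ge\Phi_t/2$. It simply observes that once a one-count is covered it stays covered, and then bounds the time to cover all counts in $[c_0+1..n]$ (with $c_0$ the initial one-count) by the sum of expected waiting times $\sum_{j>c_0} 1/p_{j-1}^+$ to extend one step to the right, handling $[0..c_0-1]$ by symmetry. This reaches the same sum $\sum_d 1/q(d)$ without any boundary lemma. Your potential-function route is also valid, but be aware that the boundary claim is not quite the ``short case distinction on $m=\min U$ and $M=\max U$'' you describe: in the generic early situation $m=0$ and $M=n$, neither $m-1$ nor $M+1$ furnishes a boundary, and the outer boundaries at $a=\min S_t$ and $b=\max S_t$ have $d$-values $a$ and $n-b$ with only $a+(n-b)\le\Phi_t$, so the maximum of the two need not reach $\Phi_t/2$. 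One then needs an additional argument via interior boundaries (or via the size of the covered middle interval) to secure $d\ge\Phi_t/2$. The claim is true, but the paper's sequential-extension argument is the shorter path.
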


To prove this result, we first estimate the time it takes to generate a neighbor of an existing point in the population. More precisely, in the following three lemmas, we regard the situation that $P$ contains an element $x$ with $f(x) = (n-d,d)$, but no element $y$ with $f(y) = (n-d+1,d-1)$; we shall then try to bound the time it takes until a $y$ with $f(y) = (n-d+1,d-1)$ is contained in the population. 

In the following two lemmas, analyzing separately the mutation and crossover phase, let us condition on a fixed outcome $\ell$ of the number of bits flipped in the mutation phase. We say that the mutation
phase is successful if $x$ was chosen as parent individual and at least one of the $\lambda$ offspring of $x$ has an $f_1$-value greater than $n-d-\ell$. Note that then also the mutation winner $x^+$ has such an $f_1$-value.

\begin{Lemma}\label{lem:mondom}
Then the success probability of the mutation phase is at least  $\frac{1}{n}(1-(1-\frac{d}{n})^{\lambda \ell}))$.
\end{Lemma}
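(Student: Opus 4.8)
The goal is to lower-bound the probability that (a) the individual $x$ with $f(x) = (n-d,d)$ is chosen as parent, and (b) among the $\lambda$ offspring generated by flipping $\ell$ random bits of $x$, at least one has $f_1$-value strictly greater than $n-d-\ell$, i.e., at least one offspring gained a net improvement in the number of ones. Since these two events are independent (parent selection is uniform over $P$ and independent of the subsequent mutation), I would factor the success probability as $\Pr[\text{$x$ selected}] \cdot \Pr[\text{some offspring improves} \mid \text{$x$ selected}]$ and bound each factor separately.

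**Bounding the parent-selection factor.** The population $P$ of the \gsemo-type algorithm on \oneminmax consists of mutually incomparable solutions, and since on \oneminmax every search point is Pareto-optimal, incomparability just means the solutions have pairwise distinct numbers of ones; hence $|P| \le n+1$. A crude but sufficient bound would then give $\Pr[\text{$x$ selected}] = 1/|P| \ge 1/(n+1)$, but to land exactly on the claimed $\frac{1}{n}$ I would instead observe that $x$ has $f$-value $(n-d,d)$ while by assumption no individual with $f$-value $(n-d+1,d-1)$ is present, so the $f_1$-values attained in $P$ lie in a set of size at most $n$ (the value $n-d+1$ is missing), giving $|P| \le n$ and $\Pr[\text{$x$ selected}] \ge \frac 1n$. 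This is the one place where the precise hypothesis of the lemma (the "hole" in the front at position $n-d+1$) is used to get the clean constant.

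**Bounding the mutation-improvement factor.** Conditioning on $x$ being the parent and on the fixed value $\ell$, each offspring flips a uniformly random set of $\ell$ bits. An offspring has $f_1$-value $> n-d-\ell$ precisely when it is \emph{not} the case that all $\ell$ flipped bits are among the $n-d$ one-bits of $x$ (if all flips hit one-bits the offspring loses exactly $\ell$ ones and lands at $n-d-\ell$; any flip of a zero-bit makes it strictly better than that). I would bound the probability that a single offspring fails, i.e., picks all $\ell$ of its flipped bits from the $n-d$ ones, by $\big(\frac{n-d}{n}\big)^{\ell} = \big(1-\frac dn\big)^{\ell}$ — the simplest route is to note $\frac{\binom{n-d}{\ell}}{\binom n\ell} = \prod_{i=0}^{\ell-1}\frac{n-d-i}{n-i} \le \big(\frac{n-d}{n}\big)^{\ell}$ since each factor is at most $\frac{n-d}{n}$. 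The $\lambda$ offspring are independent given $\ell$ and $x$, so the probability that all $\lambda$ of them fail is at most $\big(1-\frac dn\big)^{\lambda\ell}$, and hence the probability that at least one succeeds is at least $1 - \big(1-\frac dn\big)^{\lambda\ell}$. Multiplying the two factors yields $\frac 1n\big(1 - (1-\frac dn)^{\lambda\ell}\big)$, as claimed.

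**Main obstacle.** None of the individual steps is deep; the only subtlety — and the one I would be careful about — is justifying the population-size bound $|P|\le n$ rather than $n+1$ so that the stated constant $\frac 1n$ is exactly right, using the assumption that the front position $n-d+1$ is currently unoccupied. Everything else (independence of parent selection from mutation, independence of the $\lambda$ offspring given $\ell$, and the elementary estimate on the hypergeometric-style ratio) is routine, so I would keep those verifications brief.
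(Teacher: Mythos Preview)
Your proposal is correct and follows essentially the same argument as the paper: bound the parent-selection probability by $\tfrac1n$, bound the single-offspring failure probability by $\binom{n-d}{\ell}/\binom{n}{\ell}\le(1-\tfrac dn)^{\ell}$, and use independence of the $\lambda$ offspring. The only difference is that you spell out why $|P|\le n$ (using the missing front position $n-d+1$), whereas the paper simply asserts the $\tfrac1n$ bound without justification; your added care here is welcome and the rest is identical.
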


\begin{proof}
We have a probability of at least $\frac{1}{n}$ of picking $x$ as a parent. Conditional on this, consider a fixed offspring. The probability that it has an $f_1$-value of $n-d-\ell$ is 
\[\binom{n-d}{\ell} / \binom{n}{\ell} \le \left(\frac{n-d}{n}\right)^\ell.\]
Since the $\lambda$ offspring are sampled independently, the probability that at least one of them has an $f_1$-value of more than $n-d-\ell$, that is, that the mutation phase is successful, is at least $1 - (1 - \frac dn)^{\ell \lambda}$.
\end{proof}

We now turn to the analysis of the crossover phase. Assuming the mutation phase to be successful, we call the crossover phase
successful if it leads to the creation of a solution $y$ with $f_1(y) = n-d+1$.

\begin{Lemma}\label{lem:crossSucc}
If the mutation phase was successful, the crossover phase is successful with probability at least $1-\left(1-c\left(1-c\right)^{\ell-1}\right)^{\lambda}$.
\end{Lemma}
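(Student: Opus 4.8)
The plan is to show that a single crossover offspring already hits $f_1$-value $n-d+1$ with probability at least $c(1-c)^{\ell-1}$, and then to amplify this over the $\lambda$ independent crossover offspring by a standard ``at least one success'' argument.

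First I would fix notation for the mutation winner. Since the mutation phase was successful, $x$ was selected as parent and the mutation winner $x^+$, being $f_1$-maximal among the $\lambda$ offspring, satisfies $f_1(x^+)>n-d-\ell$. As $x^+$ arises from $x$ by flipping exactly $\ell$ bits, I would partition these $\ell$ positions into the set $A$ of positions that carry a one in $x$ (hence a zero in $x^+$) and the set $B$ of positions that carry a zero in $x$ (hence a one in $x^+$); write $a=|A|$, $b=|B|$, so $a+b=\ell$. Then $f_1(x^+)=(n-d)-a+b$, and the inequality $f_1(x^+)>n-d-\ell=(n-d)-a-b$ immediately forces $b\ge 1$; in particular $\ell\ge 1$, so that the exponent $\ell-1$ appearing in the claim is nonnegative.

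Next I would analyze a single offspring $y$ produced by $\cross_c(x,x^+)$. Outside the $\ell$ flipped positions $x$ and $x^+$ agree, so $y$ agrees with both there regardless of the crossover decisions. I would then fix one position $g^\ast\in B$ and consider the event that, in forming $y$, the bit at $g^\ast$ is taken from $x^+$ while each of the remaining $\ell-1$ flipped positions is taken from $x$. Because the crossover decisions are independent across positions, this event has probability exactly $c(1-c)^{\ell-1}$, and on this event $y$ coincides with $x$ at every position except $g^\ast$, where it carries a one in place of the zero of $x$; hence $f_1(y)=f_1(x)+1=n-d+1$. Consequently each crossover offspring attains $f_1$-value $n-d+1$ with probability at least $c(1-c)^{\ell-1}$.

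Finally, since the $\lambda$ crossover offspring are generated independently given $x$ and $x^+$, the probability that none of them reaches $f_1$-value $n-d+1$ is at most $\left(1-c(1-c)^{\ell-1}\right)^{\lambda}$, which yields the claimed lower bound $1-\left(1-c(1-c)^{\ell-1}\right)^{\lambda}$ on the success probability of the crossover phase. I would also remark that this estimate is uniform over all mutation winners $x^+$ compatible with a successful mutation phase, so it remains valid after conditioning on that event (and on the fixed value of $\ell$). The only slightly delicate point is the combinatorial bookkeeping behind $b\ge 1$ — that a successful mutation winner really does contain at least one newly created one-bit for the crossover to recombine with the parent; everything else is per-bit independence of the crossover together with the elementary union-type bound over the $\lambda$ offspring, so I do not expect a genuine obstacle.
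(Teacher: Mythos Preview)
Your proof is correct and follows essentially the same approach as the paper: identify a single ``good'' bit position among the $\ell$ flipped bits (one that was a zero in $x$ and became a one in $x^+$), observe that a crossover offspring inheriting exactly this bit from $x^+$ and the remaining $\ell-1$ from $x$ attains $f_1$-value $n-d+1$ with probability $c(1-c)^{\ell-1}$, and then amplify over the $\lambda$ independent crossover offspring. Your version is a bit more explicit than the paper's in justifying \emph{why} such a good bit exists (via the $A$/$B$ partition and the inequality forcing $b\ge 1$), but the argument is otherwise identical.
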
 

\begin{proof}
Since the mutation phase was successful, the mutation winner $x^+$ has an $f_1$-value of more than $n-d-\ell$. Consequently, there is at least one bit position out of the $\ell$ positions $x$ and $x^+$ differ in such that a crossover offspring inheriting this bit from $x^+$ and the $\ell-1$ others from $x$ has an $f_1$-value of exactly $n-d+1$. The probability that a fixed crossover offspring $x_i^+$ is of this kind is $c(1-c)^{\ell-1}$. Hence the probability that at least one crossover offspring is of this kind is
\[
\Pr[\exists i \in[\lambda]: f_1(x^+_i)) = n-d+1] \geq 1 - \left(1-c(1-c)^{\ell-1}\right)^{\lambda}.\qedhere
\]
\end{proof}

With the two lemmas above, we can now show a lower bound for the probability of finding a desired element on the Pareto front.

\begin{Lemma}\label{lem:witt}
  Assume that at a time $t$, we have $x \in P$ with $f(x) = (n-d,d)$, but there is no $y \in P$ with $f(y) = (n-d+1,d-1)$. Then there is a constant $C > 0$ such that the probability that at time $t+1$ we have such a $y$ in $P$ is at least
  \[
  p_{n-d}^+ := \frac{C}{n}\left(1-\left(\frac{d}{n}\right)^{\lambda k / 2}\right)\left(1-e^{-\lambda /(8 k)}\right).
  \]
  The expected time it takes to have such a $y$ in the population (counted from iteration $t$ on), is at most $t_{n-d}^+ := 1/p_{n-d}^+$ iterations.
\end{Lemma}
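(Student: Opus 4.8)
The plan is to combine Lemmas~\ref{lem:mondom} and~\ref{lem:crossSucc} by taking expectations over the random number $\ell$ of bits flipped in the mutation phase, and then to translate the per-iteration success probability into an expected waiting time via a geometric-distribution argument. Concretely, conditioning on $\ell$, the probability that iteration $t+1$ produces a $y$ with $f(y)=(n-d+1,d-1)$ is at least the product of the two conditional success probabilities from the previous lemmas, namely
\[
\frac{1}{n}\left(1-\left(1-\tfrac{d}{n}\right)^{\lambda\ell}\right)\left(1-\left(1-c(1-c)^{\ell-1}\right)^{\lambda}\right).
\]
Taking expectations over $\ell\sim\Bin(n,k/n)$, the overall one-step success probability $p_{n-d}^+$ is at least $\tfrac1n\,\mathbb{E}_\ell\!\left[(1-(1-\tfrac dn)^{\lambda\ell})(1-(1-c(1-c)^{\ell-1})^{\lambda})\right]$, and the task reduces to lower-bounding this expectation by $C\big(1-(d/n)^{\lambda k/2}\big)\big(1-e^{-\lambda/(8k)}\big)$ for a suitable constant $C$.

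The heart of the argument is to show that $\ell$ is ``typically'' of order $k$, so that both factors inside the expectation are simultaneously bounded away from $0$ with constant probability. I would fix a constant-probability event such as $E=\{\ell \in [k/2,\,2k]\}$ (or a similar interval; since $\mathbb{E}[\ell]=k$ and $k\ge 2$, a Chernoff bound gives $\Pr[E]\ge \Omega(1)$, and one must be a little careful for small $k$ such as $k=2$, where one can simply note $\Pr[\ell\ge 1]$ and $\Pr[\ell\le 2]$ are both constant). On $E$ we have $\lambda\ell\ge \lambda k/2$, hence $1-(1-d/n)^{\lambda\ell}\ge 1-(1-d/n)^{\lambda k/2}\ge 1-(d/n)^{\lambda k/2}$ after using $1-x\le e^{-x}$ and then a crude bound — actually more directly $1-(1-d/n)^{\lambda k/2} \ge 1 - (1-d/n)^{\lambda k /2}$ and one checks $(1-d/n)^{\lambda k/2}\le (d/n)^{\lambda k/2}$ fails in general, so instead I would keep the factor as $1-(1-d/n)^{\lambda k/2}$ and observe this is $\ge \tfrac12(1-(d/n)^{\lambda k/2})$ up to adjusting $C$, or simply absorb the discrepancy, since for $d\le n/2$ both are $\Theta(1)$ and for $d$ close to $n$ both behave like $\lambda k d/(2n)$. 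For the crossover factor, on $E$ we have $\ell\le 2k$, and with $c=1/k$ we get $c(1-c)^{\ell-1}=\tfrac1k(1-\tfrac1k)^{\ell-1}\ge \tfrac1k(1-\tfrac1k)^{2k-1}\ge \tfrac{1}{4k}$ for $k\ge 2$ (using $(1-1/k)^{2k-1}\ge e^{-2}\cdot(\text{const})\ge 1/4$ for $k\ge 2$ after checking the small cases). Therefore $1-(1-c(1-c)^{\ell-1})^\lambda \ge 1-(1-\tfrac1{4k})^\lambda \ge 1-e^{-\lambda/(4k)}$, which is at least $1-e^{-\lambda/(8k)}$ up to the constant. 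Restricting the expectation to the event $E$ and multiplying by $\Pr[E]=\Omega(1)$ then yields the claimed bound on $p_{n-d}^+$ with an appropriate constant $C$.

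For the waiting-time statement, I would argue that as long as no such $y$ is in the population, the hypothesis of the lemma continues to hold at every subsequent iteration: the point $x$ with $f(x)=(n-d,d)$ is never removed from $P$ because to remove it a new individual with objective value weakly dominating $(n-d,d)$ — i.e. having at least $n-d$ ones and at least $d$ zeros, hence exactly $f$-value $(n-d,d)$ — would have to enter, but an individual with value $(n-d,d)$ does not dominate $x$ and the GSEMO-style update only ever keeps one individual per Pareto-front position; and the first time a $y$ with $f(y)=(n-d+1,d-1)$ appears is exactly the target event. Consequently the number of iterations until success is stochastically dominated by a geometric random variable with success probability $p_{n-d}^+$ at each trial, so its expectation is at most $1/p_{n-d}^+ = t_{n-d}^+$.

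The main obstacle I anticipate is the small-$k$ regime and the precise bookkeeping of constants: the clean bounds $(1-1/k)^{\ell-1}\ge 1/(4k)$-type estimates and the Chernoff tail $\Pr[\ell\in[k/2,2k]]=\Omega(1)$ are genuinely only asymptotic, so for $k$ equal to $2,3,\dots$ one must either verify the handful of base cases by hand or choose the constant $C$ small enough (and possibly the threshold in the event $E$) to cover them uniformly. A secondary subtlety is making sure the factor $1-(1-d/n)^{\lambda\ell}$ is correctly lower-bounded by $1-(d/n)^{\lambda k/2}$ rather than $1-(1-d/n)^{\lambda k/2}$; this needs the elementary observation that for the purpose of a constant-factor lower bound one may replace the former by the latter (they differ by at most a constant factor on the relevant range of $d$), which again is just constant-chasing but should be stated explicitly.
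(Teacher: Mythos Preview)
Your plan matches the paper's proof almost step for step: restrict to $\ell$ in a window around $k$ (the paper takes $[\lceil k/2\rceil,\lfloor 3k/2\rfloor]$ rather than your $[k/2,2k]$), bound the mutation and crossover factors uniformly on that window via Lemmas~\ref{lem:mondom} and~\ref{lem:crossSucc}, observe that $\Pr[\ell\in\text{window}]=\Theta(1)$ (Chernoff for $k=\omega(1)$, a direct check of $\Pr[L=k]=\Theta(1)$ for constant~$k$), and finish with the geometric-domination argument for the waiting time. The crossover estimate in the paper is $c(1-c)^{\ell-1}\ge \tfrac1k(1-\tfrac1k)^{3k/2}\ge \tfrac1{8k}$, giving $1-e^{-\lambda/(8k)}$; your $[k/2,2k]$ window would need the analogous bound $(1-1/k)^{2k-1}\ge\text{const}$, which also works after adjusting~$C$.

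The one substantive point is the discrepancy you flagged between $1-(1-d/n)^{\lambda k/2}$, which is what Lemma~\ref{lem:mondom} actually delivers, and the $1-(d/n)^{\lambda k/2}$ appearing in the statement. You are right that these differ, but your proposed reconciliation (``for $d\le n/2$ both are $\Theta(1)$'', ``for $d$ close to $n$ both behave like $\lambda k d/(2n)$'') is wrong: at $d=1$ the first is about $\lambda k/(2n)$ while the second is essentially~$1$, and for $d$ close to $n$ the roles reverse, so no constant~$C$ can absorb the gap. This is simply a typo in the paper. Its own proof already writes $(d/n)^{\lambda\ell}$ when invoking Lemma~\ref{lem:mondom} (which has $(1-d/n)^{\lambda\ell}$), and the downstream application in the proof of Theorem~\ref{thm:main}, equation~\eqref{eq:main1}, carries the factor $1-(1-i/n)^{\lambda k/2}$, consistent with the correct $1-d/n$. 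So the intended bound is $p_{n-d}^+=\tfrac{C}{n}\bigl(1-(1-d/n)^{\lambda k/2}\bigr)\bigl(1-e^{-\lambda/(8k)}\bigr)$; with that reading your argument is complete, and trying to match the literal statement is neither possible nor what the paper actually uses.
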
 

\begin{proof}
  Let $E$ denote the event that at time $t+1$, we have a $y \in P$ with $f_1(y) = (n-d+1)$. Let $L$ be the random variable describing the value of $\ell$ sampled in line \ref{line:mut1} of Algorithm~\ref{alg:opllgsemo}. By the law of total probability, we have
  \[
  \Pr[E] \geq \sum_{\ell = \lceil k / 2\rceil}^{\lfloor 3 k / 2\rfloor} \Pr[E \mid L=\ell] \Pr[L=\ell].
  \]
  Let us denote by $E_1$ the event that the  mutation phase was successful and by $E_2$ the event that the crossover phase was successful. We note that $E \supseteq E_1 \wedge E_2$. Using Lemma \ref{lem:mondom} and \ref{lem:crossSucc}, we estimate, for a given $\ell \in [k/2..3k/2]$,
\begin{align*}
\Pr[E \mid L=\ell] 
&\ge \Pr[E_1 \mid L=\ell] \Pr[E_2 \mid E_1 \wedge  L=\ell]\\
&\geq \frac{1}{n}\left(1-\left(\frac{d}{n}\right)^{\lambda \ell}\right)\left(1-\left(1-c(1-c)^{\ell-1}\right
)^{\lambda}\right).
\end{align*}
Using the facts that $k \geq 2, c=1 / k$, and that we are only interested in values $\ell \in [k / 2 .. 3 k / 2]$, we compute
\begin{align*}
\left(1-c(1-c)^{\ell-1}\right)^{\lambda} & \leq\left(1-\frac{1}{k}\left(1-\frac{1}{k}\right)^{3 k / 2}\right)^{\lambda} \\
& \leq(1-1 /(8 k))^{\lambda} \leq e^{-\lambda /(8 k)},
\end{align*}
where we use in the second step the fact that for all $m \geq 2$ we have $(1-1 / m)^{m} \geq 1 / 4$ and in the third step that for all $m \geq 2$ we have $1 / e \geq(1-1 / m)^{m} \geq 1 /(2 e)$; in the following, we shall use these inequalities without explicit mention.
Since we are interested only in $\ell \geq k / 2$, we may estimate $\left(\frac{d}{n}\right)^{\lambda \ell} \leq\left(\frac{d}{n}\right)^{\lambda k / 2}$. Thus, in total we obtain
\[
\Pr[E \mid L = \ell] \geq\left(1-\left(\frac{d}{n}\right)^{\lambda k / 2}\right)\left(1-e^{-\frac{\lambda}{8 k}}\right)
,\]
which is independent of $\ell \in[k / 2 .. 3 k / 2]$.

Finally, it is not hard to see that $\sum_{\ell=\lceil k / 2\rceil}^{\lfloor 3 k / 2\rfloor} \operatorname{Pr}[L=\ell]$ is constant. For $k=\omega(1)$ this follows easily from Chernoff's bound. For constant $k$ we trivially have $\operatorname{Pr}[L=k]=\Theta(1)$. This proves the claim on $p_{n-d}^+$. 

The estimate for $t_{n-d}^+$ simply follows from the fact that the critical assumption that $P$ contains an $x$ with $f(x) = (n-d,d)$ is true in all future iterations as well. Hence by the above, the time to find the desired $y$ is stochastically dominated~\cite{Doerr19tcs} by a geometric random variable with success rate $p_{n-d}^+$, hence its expectation is at most $1/p_{n-d}^+$.
\end{proof}

From Lemma~\ref{lem:witt}, we now easily prove Theorem~\ref{thm:main}. 

\begin{proof}[Proof of Theorem \ref{thm:main}]
  By definition of the \opllgsemo, which is a variant of the the \gsemo, once a point of the Pareto front is covered by the population, it remains so forever. We can use this observations together with Lemma~\ref{lem:witt}, which gives an estimate on the time it takes for a neighbor of a covered point to be covered as well, to estimate the runtime by the sum of the estimates (from Lemma~\ref{lem:witt}) for the waiting times to cover an additional point. 
  
  Let us suppose that the random initial individual is $x$ with $f(x) = (n-d,d)$ for some $d > 0$. We estimate the time until for all $j \in [n-d+1..n]$ we have an individual $y$ with $f(y) = (j,n-j)$ in the population
  by
\begin{equation}\label{eq:main1}
 \sum_{j = n-d+1}^{n} t_{j-1}^+ \le \frac n C \sum_{i=1}^{n}\left(1-\left(1-\frac{i}{n}\right)^{\lambda k/2 }\right)^{-1} \left(1-e^{-\lambda /(8 k)}\right)^{-1}.
\end{equation}
For $\lambda k i > 2n$,  we have that  $\left(1-\frac{i}{n}\right)^{\lambda k / 2} \leq e^{-\frac{i\lambda k}{2n}} \leq e^{-1}$.
On the other hand, by Bernoulli’s inequality it holds that $(1 - x)^{m} \leq (1 + mx)^{-1}$ for $m \in \Z_{>0}$ and $x \in [0, 1]$. Hence for $\lambda k i \le 2 n $  we get that $\left(1-\frac{i}{n}\right)^{\lambda k /2 } \leq \left(1+\frac{\lambda k i}{2n}\right)^{-1} = 1 - \frac{\lambda k i}{2n+\lambda k i} \leq 1 - \frac{\lambda k i}{4n}$. Thus we have that 
\begin{equation}\label{eq:main2}
\begin{aligned}
\sum_{i=1}^{n}\left(1-\left(1-\frac{i}{n}\right)^{\lambda k / 2}\right)^{-1} 
&\leq \sum_{i=1}^{n}\left(1-\max\left\{ 1- \frac{\lambda k i}{4n}, e^{-1}\right\}\right)^{-1}\\
&\leq \sum_{i=1}^{n}\max\left\{\frac{4n}{\lambda k i}, (1-e^{-1})^{-1}\right\} \\
&\leq O\left(\frac{n\log(n)}{\lambda k} + n\right).
\end{aligned} 
\end{equation}
Finally, we estimate the factor $\left(1-e^{-\frac{\lambda} {8 k}}\right)^{-1}$ in~\eqref{eq:main1}. Clearly, if $\frac{\lambda} {8 k} \geq 1$, then $\left(1-e^{-\frac{\lambda} {8 k}}\right)^{-1} \ge 1 - e^{-1} = O\left(1\right)$. For $\frac{\lambda} {8 k}<1$, we use the fact that for all $x>0$ we have $\exp (-x)<1-x+\frac{x^{2}}{2}$ and bound
$$
e^{-\frac{\lambda} {8 k}}<1-\frac{\lambda}{8 k}+\frac{1}{2}\left(\frac{\lambda}{8 k}\right)^{2} \leq 1-\frac{\lambda}{8 k}+\frac{1}{2} \frac{\lambda}{8 k}=1-\frac{\lambda}{16 k}.
$$
This shows that for $\frac{\lambda}{8 k} < 1$ we have
$$
\left(1-e^{-\frac{\lambda} {8 k}}\right)^{-1}<\left(\frac{\lambda} {16 k}\right)^{-1}=O(k / \lambda).
$$
Consequently, in both cases, we have
\begin{equation}\label{eq:main3}
\left(1-e^{-\frac{\lambda} {8 k}}\right)^{-1}=O\left(\max\left\{1,\frac{\lambda}{k}\right\}\right).
\end{equation}
Replacing the terms in (\ref{eq:main1}) by (\ref{eq:main2}) and (\ref{eq:main3}) it is thus easy to see that the overall number of iterations needed is
$$
O\left(\max \left\{1, \frac{k}{\lambda}\right\}\left(\frac{n \log n}{\lambda k}+n\right)\right).
$$
Since one iteration of Algorithm \ref{alg:opllgsemo} requires $3 \lambda$ fitness evaluations, the expected time (in terms of fitness evaluations) is
\begin{align}\label{eq:main4}
O&\left(n \lambda\left(\frac{n \log n}{\lambda k}+n\right)+k n\left(\frac{n \log n}{\lambda k}+n\right)\right) \nonumber\\ &\quad=O\left(\left(\frac{1}{k}+\frac{1}{\lambda}\right) n^2 \log n+(k+\lambda) n^2\right).
\end{align}
We recall that this estimates the time to find a solution $y$ with $f(y) = (j,n-j)$ for all $j \in [n-d+1..n]$ from an initial solution $x$ with $f(x) = (n-d,d)$, and more generally, from an arbitrary initial state containing such an $x$. By symmetry, and since the above estimate does not depend on $d$,  the same estimate holds for the time to find a $y$ with $f(y) = (j,n-j)$ for all $j \in [0..n-d-1]$. Hence the entire expected runtime is the sum of these two (identical) expressions, and this finished the proof.
\end{proof}

\section{State-Dependent Parameters}\label{sec:statedep}

In this section we prove that a suitable state-dependent choice of~$\lambda$, ensuring larger $\lambda$ values towards the more difficult end of the optimization process, together with the standard parameter setting for $k$ and $c$, provably yields an asymptotic speed-up, namely an expected optimization time of $O(n^{2})$. We are not aware of any previous results showing more than a constant-factor gain through a dynamic parameter choice for a MOEA for discrete search spaces.  

To describe the quality of a state, we need the following definition.

\textbf{Definition:} Let $P$ be a population. For $b \in \{1,2\}$, let $o_b(P) := \min\{j \in[0..n-1] \mid (\exists x \in P : f_b(x) = j \wedge (\forall y \in P : f_b(y) \neq j+1)\}$. 

The idea behind this definition is to grasp the missing solution that is easiest to find. If $o_1(P) = n-d$ for some $d$, then we are in the situation of Lemma~\ref{lem:witt}, that is, we have a solution $x \in P$ with $f(x) = (n-d,d)$, but we do not have a solution $y$ with $f(y) = (n-d+1,d-1)$, and among all such pairs this pair $(x,y)$ is the one that most easily allows to generate $y$ from $x$. We then choose $\lambda$ in such a way that an expected constant number of iterations is sufficient to find $y$. 

\begin{theorem}\label{thm:fitDep}
Consider the \opllgsemo with standard parameters $p =\frac{\lambda}{n}$ and $c=\frac{1}{\lambda}$ together with a state-dependent choice of $\lambda$ such that in the beginning of each iteration $\lambda$ is set to $\lambda^* =\sqrt{\frac{n}{n-\min(o_1(P), o_2(P))}}$. Then the expected optimization time on \oneminmax is $O(n^2)$. 
\end{theorem}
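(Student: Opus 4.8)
The plan is to rerun the ``sum of waiting times'' argument behind the proof of Theorem~\ref{thm:main}, but now keeping track of the fact that the per-iteration cost $3\lambda$ changes with the state. Write $D(P):=n-\min(o_1(P),o_2(P))$ for the fitness-distance of the \emph{easiest missing frontier point}, so the rule sets $\lambda=\max\{2,\lceil\lambda^*\rceil\}$ with $\lambda^*=\sqrt{n/D(P)}$ (the clipping to~$2$ costs only constant factors and is what makes Lemma~\ref{lem:witt} applicable). Throughout, $f_1(P)$ denotes the set of first-objective values attained in $P$.

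First I would prove a uniform one-iteration estimate: \emph{in every iteration, with probability $\Omega(1/n)$ a point adjacent to $P$ on the Pareto front is added}, i.e.\ $|f_1(P)|$ increases. This is Lemma~\ref{lem:witt} (or, if the easiest gap lies on the $f_2$ side, its mirror image obtained by swapping the two objectives), applied with $k=\lambda$ and $d=D(P)$: if $D(P)\le n/4$ then $\lambda\ge\sqrt{n/D(P)}$ and hence $\big(1-D(P)/n\big)^{\lambda k/2}\le e^{-1/2}$; if $D(P)>n/4$ then $\lambda=2$ and $\big(1-D(P)/n\big)^{2}\le (3/4)^2$; in both cases the mutation-success factor of $p^+_{n-d}$ is $\Omega(1)$, while $1-e^{-\lambda/(8k)}=1-e^{-1/8}$ is a constant. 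Since $|f_1(P)|$ grows from $1$ to $n+1$, this already bounds the expected number of \emph{iterations} by $O(n^2)$; but an iteration can cost $3\sqrt n$ evaluations, so to reach $O(n^2)$ \emph{evaluations} one must argue that the expensive (large-$\lambda$) iterations are few. The tool for this is the structural fact that \emph{$f_1(P)$ fails to be an interval only while $D(P)>n/2$}: if $u<w$ lie in $f_1(P)$ but $u+1,\dots,w-1$ do not, then $o_1(P)\le u$ and $\max\{m\in f_1(P):m-1\notin f_1(P)\}\ge w$, whence $D(P)\ge\max\{n-u,w\}>n/2$. Equivalently, whenever $D(P)\le n/2$ the population covers an interval $[a..b]$ with $a\le n/2\le b$, and then closing the easiest frontier point just extends this interval by one.

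I would then split the run by the value of $D(P)$ at the beginning of the iteration. For iterations with $D(P)>n/2$ we have $\lambda=2$, so each costs $6$ evaluations, and by the one-iteration estimate each covers a new point with probability $\Omega(1/n)$; a Wald-type argument (the total number of ``new point'' events over the whole run is at most $n$) bounds their expected number by $O(n^2)$, hence $O(n^2)$ evaluations. For iterations with $D(P)\le n/2$ the population equals $[a..b]$ with $a=\min f_1(P)$, $b=\max f_1(P)$; since over the whole run $b$ only increases and $a$ only decreases, $D(P)=\max(n-b,a)$ is non-increasing along these iterations and at most two interval extensions are needed to decrease it by one. Thus for each fixed $d\in[1..n/2]$ there are $O(1)$ extensions performed while $D(P)=d$, each of which (one-iteration estimate) needs $O(n)$ iterations in expectation, each of cost $3\lambda=O(\sqrt{n/d}\,)$. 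Summing,
$$
\sum_{d=1}^{n/2} O(1)\cdot O(n)\cdot O\!\left(\sqrt{n/d}\,\right)=O\!\left(n^{3/2}\sum_{d=1}^{n/2}\frac1{\sqrt d}\right)=O(n^2),
$$
and adding the two groups (plus the single initial evaluation) gives the claimed $O(n^2)$ bound.

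The step I expect to be most delicate is handling the interior gaps that the crossover phase can create (a crossover offspring landing two or more positions below/above the current extreme and being accepted): one must know they neither accumulate nor force large-$\lambda$ work, which is exactly what the dichotomy ``$f_1(P)$ non-interval $\Rightarrow D(P)>n/2$'' delivers — whenever the population is messy we sit in the cheap $\lambda=2$ regime, and whenever $\lambda$ is large the population is a clean interval. A minor additional point is to make the Wald-type stopping-time estimates precise and to check that the integer rounding and clipping of $\lambda^*$ affect only constants in the bounds of Lemma~\ref{lem:witt}.
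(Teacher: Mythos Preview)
Your proof is considerably more careful than the paper's own three-line argument, which simply asserts that each value of $\lambda^*$ is used for an expected $O(n)$ iterations and sums $O\bigl(n\sum_d\sqrt{n/d}\,\bigr)=O(n^2)$, without discussing how $D(P)$ could fail to be monotone. Your Phase~A/Phase~B split via the structural fact ``$f_1(P)$ non-interval $\Rightarrow D(P)>n/2$'' is a genuine addition that confines all the messiness (gaps created by far-landing crossover offspring) to the cheap $\lambda=O(1)$ regime; the clipping $\lambda\ge 2$ is another point the paper glosses over when invoking Lemma~\ref{lem:witt}.

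One step in the Phase~B accounting needs tightening. From ``at most two interval extensions are needed to decrease $D(P)$ by one'' you conclude ``there are $O(1)$ extensions performed while $D(P)=d$''. Taken literally this is false: if $a=d$ and $b\ge n-d$, then every extension $b\to b+1$ keeps $D(P)=\max(n-b,a)=d$; only the extension on the side selected by $\min(o_1,o_2)$ (here $a\to a-1$) lowers it, and nothing in the algorithm prevents many wrong-side extensions before that happens. What you actually need --- and what your one-iteration estimate already delivers --- is that each Phase~B iteration at $D(P)=d$ performs the \emph{easiest-side} extension with probability $\Omega(1/n)$, and at most two of those suffice to drop $D(P)$ below~$d$. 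Hence the expected number of Phase~B iterations at level~$d$ is $O(n)$ directly (two successes of a rate-$\Omega(1/n)$ event), irrespective of how many wrong-side extensions or Phase~A excursions intervene. With this phrasing the final sum $\sum_{d\le n/2}O(n)\cdot O\bigl(\sqrt{n/d}\,\bigr)=O(n^2)$ goes through exactly as you wrote it.
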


\begin{proof} 
Consider one iteration of this \ollgagsemo. By symmetry, assume that $\min(o_1(P), o_2(P)) = o_1(P)$. Applying Lemma \ref{lem:witt} with $n-d = o_1(P)$ and $\lambda = k  = \lambda^*$, we see that we have a probability of $\Omega(1/n)$ to find a search point $y$ with $f(y) = (n-d+1,d-1)$ in one iteration. Consequently, we use each different value of $\lambda^*$ only for an expected number of $O(n)$ iterations. 
This bounds the entire expected optimization time by
$$
O \left(n\sum_{d=0}^{n-1} \sqrt{n /(n-d)}\right)=O\left(n\sqrt{n} \int_{1}^{n} \sqrt{1 / i} \mathrm{~d} i\right)=O(n^2) .
$$
\end{proof}

\section{Self-Adjusting Parameter Choices}
\label{sec:runtime2}

The previous section showed that a non-trivial state-dependent parameter choice gave better results than the best static parameter setting we found. The question is how an algorithm user would find this state-dependent parameter setting. Fortunately, as in the single-objective case~\cite{DoerrD18}, there is a way to let the algorithm discover a good dynamic parameter setting itself, namely via a self-adjusting choice of $\lambda$ inspired by the classic one-fifth rule (and then following the standard setting $k = \lambda/n$ and $c = 1/\lambda$). 

We first design such a self-adjusting mechanisms for the \ollgagsemo and then analyze the resulting runtime. We note that while dynamic parameter choise have been designed for MOEAs, e.g.,~
\cite{LaumannsRS01, BucheMK03, IgelHR07}, we are not aware of any such works in discrete search spaces.

To design our self-adjusting parameter setting, 
we first recall that the idea of the one-fifth rule is to adjust the parameters via multiplicative changes in such a way that roughly each fifth iteration is successful. Here success usually means that a strictly better solution was found. This definition makes not too much sense for multi-objective optimization, but for a \gsemo variant, it is natural to speak of \emph{success} if the population at the end of the iteration covers more points of the Pareto front.

We then observe that asking for a success roughly each five iterations might be too much to ask for in a \gsemo variant. Since in each iterations an offspring is generated from a random parent (chosen from a population of, here, up to $n+1$ elements), it might just take very long until a parent is chosen which has a reasonable chance to create an interesting offspring. For that reason, we rather aim at a success every roughly $5n$ iterations. With a multiplicative parameter update, this means that for a constant update strength parameter $F > 1$, we replace $\lambda$ by $\lambda /F$ in case of a success, and we replace $\lambda$ by $\lambda F^{1/ (5n-1)}$ otherwise. We set the mutation strength~$k$ and the crossover bias~$c$ depending on $\lambda$ following the standard setting, that is, $k = \lambda/n$ and $c=1/\lambda$. We allow that $\lambda$ takes non-integral values and assume that values rounded to the nearest integer are used whenever integers are required. We initialize $\lambda$ cautiously with $\lambda = 1$ and we ensure that $\lambda$ never leaves the interval $[1,\lambda]$. See Algorithm~\ref{alg:selfAdj} for the pseudo-code of this self-adjusting \opllgsemo.

\begin{algorithm2e}%
    $\lambda \leftarrow 1$\;
    Generate $x \in \{0,1\}^n$\ uniformly at random and $P\leftarrow\{x\}$\;
    \While{not stop condition}
        {
        Uniformly at random select one individual $x$ from $P$\;
        Sample $\ell$ from a binomial distribution $\mathcal{B}(n,\frac{k}{n})$\;
        Generate $x_1,x_2,...,x_{\lambda} \in \{0,1\}^n $ via randomly flipping $\ell$ bits of $x$ \label{line:mut1_2}\;
        Select $x^+, x^- \in \{x_1,x_2,...,x_{\lambda}\}$ such that $x^+$ maximizes $f_1$ and $x^-$ maximizes $f_2$\;
        Generate $x^+_1,x^+_2,...,x^+_{\lambda} \in \{0,1\}^n $ via $\cross_c(x,x^+)$ \label{line:cross1_2}\;
        Generate $x^-_1,x^-_2,...,x^-_{\lambda} \in \{0,1\}^n $ via $\cross_c(x,x^-)$\label{line:cross2_2}\;
        \For{$y \in \{x^+_1,x^+_2,...,x^+_{\lambda},x^-_1,x^-_2,...,x^-_{\lambda}\}$}
            {\If{there is no $z \in P$ such that $y\preceq z$}{$P = \{z\in P | z \npreceq y \}\cup\{y\} $}}
        \If{Success}{$\lambda\leftarrow \max\{1, \lambda / {F}\}$}
        \Else{$\lambda\leftarrow \min\{n,\lambda F^{\frac{1}{5n-1}}$\}}}
    \Return{P}
\caption{The self-adjusting \opllgsemo. We always have $k = \lambda$ and $c = 1/\lambda$. We assume that $\lambda$ is rounded to the nearest integer where an integer is required. Success means that the iteration has increased the population. }
\label{alg:selfAdj}
\end{algorithm2e}

We now conduct a runtime analysis for the self-adjusting version for our \opllgsemo and show that it can optimize the \oneminmax problem in quadratic time. 

\begin{theorem}\label{thm:square-time}
The expected optimization time of the self-adjusting \opllgsemo  on \oneminmax is $O(n^2)$ when the hyper-parameter~$F > 1$ is chosen sufficiently small.
\end{theorem}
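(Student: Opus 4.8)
The plan is to show that the one-fifth rule keeps $\lambda_t$ within constant factors of the state-dependent value $\lambda^*_t = \sqrt{n/(n-\min(o_1(P_t),o_2(P_t)))}$ used in Theorem~\ref{thm:fitDep}, and then to recover the $O(n^2)$ bound essentially from the calculation in the proof of that theorem. First I would fix notation: let $d_t := n-\min(o_1(P_t),o_2(P_t))$ be the difficulty of the easiest currently missing Pareto point, so that the target value is $\lambda^*_t=\sqrt{n/d_t}$. On \oneminmax two search points weakly dominate each other iff they have the same $f_1$-value, so the population only grows, a point is accepted exactly when its $f_1$-value is new, the run ends once $d_t$ has been driven down along the whole front, and $d_t$ can change only at a successful iteration; in particular $\lambda^*_t$ is non-decreasing and bounded by $\sqrt n$. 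By Lemma~\ref{lem:witt}, applied with $k=\lambda=\lambda_t$, the probability $p^+_t$ that a single iteration covers the easiest missing point is $\Omega(1/n)$ once $\lambda_t=\Omega(\lambda^*_t)$ (this is exactly the estimate used in the proof of Theorem~\ref{thm:fitDep}), and degrades to $\Theta(\lambda_t^2 d_t/n^2)$ while $\lambda_t$ is a sub-constant fraction of $\lambda^*_t$. Hence the value of $\lambda_t$ at which the update rule is, in expectation, neutral — where a success happens roughly once every $5n$ iterations — is $\Theta(\lambda^*_t)$.

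Second, I would run a drift analysis on a truncated logarithmic potential (roughly $Y_t := \max\{0,\ln(\lambda^*_t/\lambda_t)\}$, combined with a progress measure in $d_t$) to show that $\lambda_t$ stays in $[\Omega(\lambda^*_t),O(\lambda^*_t)]$ apart from short and cheap transients. For the upper side: whenever $\lambda_t\gtrsim\lambda^*_t$ the success probability $\Theta(1/n)$ exceeds the target frequency $1/(5n)$, so $\ln\lambda_t$ has a negative drift there; since pushing $\lambda_t$ above $2^j\lambda^*_t$ requires $\Theta(jn/\ln F)$ essentially-consecutive failures at success probability $\Theta(1/n)$, an event of probability $2^{-\Omega(j/\ln F)}$, choosing $F>1$ close enough to $1$ makes the resulting tail summable and gives $\mathbb{E}[\lambda_t/\lambda^*_t]=O(1)$ (and similarly bounded moments). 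For the lower side: whenever $\lambda_t$ is below $\lambda^*_t$, failures push $\lambda_t$ up multiplicatively; crucially, because $\lambda^*_t$ is non-decreasing and at most $\sqrt n$, the total amount of upward movement the rule ever has to supply to keep pace with the rising target is $O(\log n)$ in logarithmic scale, which amounts to only $O(n\log n/\ln F)$ additional ``catch-up'' iterations over the entire run, each of cost $O(\lambda_t)=O(\lambda^*_t)=O(\sqrt n)$ evaluations. (A successful iteration divides $\lambda_t$ by the constant $F$; this constant-factor drop is absorbed into the $O(\cdot)$'s.) This part parallels the single-objective analysis of~\cite{DoerrD18}.

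Third, I would assemble the runtime. By symmetry between the two objectives (as in the proof of Theorem~\ref{thm:main}) it suffices to bound the time to drive $d_t$ down on each of the two sides of the front. By the second step, once $\lambda_t=\Omega(\lambda^*_t)$ a decrease of $d_t$ takes $O(n)$ iterations in expectation (geometric waiting time with success probability $\Omega(1/n)$), and during those iterations the expected per-iteration cost is $\mathbb{E}[O(\lambda_t)]=O(\lambda^*_t)=O(\sqrt{n/d_t})$ fitness evaluations. Summing over all values $d_t$ can take and adding the $O(n^{3/2}\log n/\ln F)=o(n^2)$ contribution of the catch-up iterations gives
\[
O\!\left(\sum_{d=1}^{n} n\sqrt{n/d}\right)=O\!\left(n\sqrt{n}\int_{1}^{n} i^{-1/2}\,\mathrm{d}i\right)=O(n^2),
\]
exactly as in Theorem~\ref{thm:fitDep}.

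The main obstacle is the second step: controlling $\lambda_t$ against the moving, random target $\lambda^*_t$ tightly enough to bound not just the number of iterations but the total cost $\sum_t\lambda_t$. A long run of failures is not very unlikely even when $p^+_t=\Theta(1/n)$, so one genuinely needs $F$ close to $1$ so that such a run barely moves $\lambda_t$, and one needs an occupation-time/drift statement strong enough to control $\mathbb{E}[(\lambda_t/\lambda^*_t)^c]$ for the geometric tail sums, not merely $\mathbb{E}[\lambda_t/\lambda^*_t]$. The other delicate point is the coupling between $\lambda_t$ and $d_t$: a successful iteration simultaneously decreases $d_t$ (raising the target) and divides $\lambda_t$ by $F$ (lowering it), so the right object is a combined potential — a progress measure in $d_t$ plus the deficit $\max\{0,\ln(\lambda^*_t/\lambda_t)\}$ — and one must verify that the total upward catch-up the rule is ever asked to perform is dominated by the $\Theta(n^2)$ iterations performed in any case, and that the cost-weighting by $\lambda_t$ survives this (dependent) accounting. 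Everything else is bookkeeping plus the multiplicative-/additive-drift arguments already used in the single-objective setting.
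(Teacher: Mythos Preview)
Your plan follows essentially the same line as the paper's own treatment, which also omits a formal proof and gives only a sketch: show that the self-adjusted $\lambda_t$ stays close to the state-dependent target $\lambda^*_t$ of Theorem~\ref{thm:fitDep}; argue separately that undershoot is cheap (raising $\lambda$ to $\lambda^*$ via the multiplicative update costs $O(n\lambda^*)$ evaluations, the same order as the work already budgeted at $\lambda=\lambda^*$) and that overshoot is controlled (a run of $\gamma n$ failures once $\lambda\ge\lambda^*$ has probability $\exp(-\Omega(\gamma))$ while inflating $\lambda$ by only $F^{\Theta(\gamma)}$, so for $F$ close enough to $1$ the expected extra cost is bounded); then recover the $O(n^2)$ bound via the sum $\sum_d n\sqrt{n/d}$ from Theorem~\ref{thm:fitDep}. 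Your write-up is more explicit about the drift/potential machinery, but the skeleton is the same.

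One concrete claim to fix: $\lambda^*_t$ is \emph{not} non-decreasing in general. A successful iteration can insert a point whose $f_1$-value lies strictly below, and not adjacent to, every currently covered $f_1$-value; this creates a new lowest gap boundary, lowers $o_1(P)$, and can lower $\min(o_1,o_2)$, hence increase $d_t$ and decrease $\lambda^*_t$. Your $O(\log n)$ bound on the total logarithmic catch-up, and with it the $O(n^{3/2}\log n)$ bookkeeping term, rests on that monotonicity and does not go through as stated. The repair is easy and is in fact what the paper's sketch does: a drop in $\lambda^*_t$ merely puts you in the overshoot regime you already control, and catch-up can be charged \emph{per missing Pareto point} (an $O(n\lambda^*)$ budget each, at most $n$ of them) rather than globally. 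With that amendment the plan stands.
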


We omit the formal proof of this result for reasons of space. A main ingredient of the proof is that the population size $\lambda$ evolved by the one-fifth success rule is usually not very far from the state-dependent
choice $\lambda^\ast$ analyzed in the previous section.

More precisely, we note that a $\lambda$ value smaller than $\lambda^*$ is not critical. Due the multiplicative update and the fact that $F$ is a constant larger than one, it takes only $O(n\lambda^*$) fitness evaluations to bring $\lambda$ up to $\lambda^*$. This ignores the possibly finding of new points of the Pareto front; since the missing point in the definition of $o_b(P)$ is the point easiest to find, this event is actually a positive one (we found a search point with a smaller value (that is, cost of one iteration) than estimated). Taking $O(n \lambda)$ fitness evaluations to adjust $\lambda$ to $\lambda^*$ is uncritical, since even with the optimal value $\lambda = \lambda^*$ we allow for $O(n)$ iterations, hence $O(n\lambda^*)$ fitness evaluations, to find the desired individual. 

Once we have $\lambda \ge \lambda^*$, as computed in the proof of Theorem~\ref{thm:fitDep}, we have an $\Omega(1/n)$ probability to find the desired individual. Hence the probability to fail for $\gamma n$ iterations, is $\exp(-\Omega(\gamma))$. Such a sequence of failures increases the $\lambda$ value by a factor of $(F^{1/(5n-1)})^{\gamma n}$. If $F$ is chosen sufficiently small (but larger than one), then the cost incurred by this too high value of $\lambda$ is outnumbered by the small probability of $\exp(-\Omega(\gamma))$ of this negative event.

\section{Experiments}\label{sec:experiments}

\begin{figure}
\begin{center}
\includegraphics[width=0.8\textwidth]{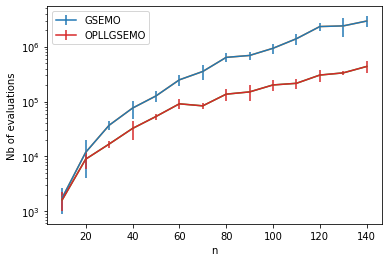}
\caption{The mean number of function evaluations with standard deviation (in $10$ independent runs) of the \gsemo and the \opllgsemo on \oneminmax.}\label{fig:results}
\end{center}
\end{figure}

To see if the asymptotic runtime differences of the algorithms regarded in this work are visible already for realistic problem sizes, we implemented the algorithms and ran them on the \oneminmax problem of size $n = 10,20,...,140$.

For the \opllgsemo we use the standard parameter setting $k = \lambda = 1/c$ with $\lambda = 7\log n$. Unfortunately, we could not find parameters that led to an interesting performance of the self-adjusting \opllgsemo.

The average runtimes of the \gsemo and \opllgsemo are displayed in Figure~\ref{fig:results}. The superiority of the \opllgsemo is clearly visible, being more than a factor of $5$ faster for the largest problem sizes.

\section{Conclusion}\label{sec:conclusion}

In this work, we showed how to incorporate the main building block of the single-objective \ollga algorithm into a MOEA, namely the GSEMO algorithm. Our mathematical runtime analysis on the \oneminmax benchmark showed that it profits from the same speed-ups that the \ollga did on the \onemax benchmark, and this for fixed parameters, parameters depending on the spread of the population, and a self-adjusting parameter choice (where the latter two needed some modification compared to the \ollga). 

Given these positive results, a natural continuation of this line of research is to see which other advantages of the \ollga transfer to the multi-objective case. Given the good performance of the \ollga on multimodal problems, an interesting next test problem could be the multimodal multi-objective benchmark designed in~\cite{DoerrZ21aaai}. Given that the \ollga showed a good performance with heavy-tailed parameter choices~\cite{AntipovBD20gecco, AntipovBD21gecco}, both on simple and multimodal problems, seeing if these advantages continue into the multi-objective world is an equally interesting direction for future research.

From a broader perspective, this work also shows that the central idea of the \ollga can be encapsulated into new complex mutation operator, which can easily be combined with existing algorithms. We are optimistic that this idea, here only done for the \gsemo, can be profitable also for other algorithms, let it be multi-objective ones such as the NSGA-II or classic EAs.

Overall, this work shows that it can be interesting to try to transfer the recent theory developments in the better understood single-objective world into the theory of MOEAs.

\subsection*{Acknowledgement}

This work was supported by a public grant as part of the
Investissements d'avenir project, reference ANR-11-LABX-0056-LMH,
LabEx LMH.

\newcommand{\etalchar}[1]{$^{#1}$}


}
\end{document}